\title{Tree-Structured Neural Machine\\ for Linguistics-Aware Sentence Generation}
\newtheorem{myApt}{Assumption}
\newtheorem{myTheo}{Theorem}
\newtheorem{myDef}{Definition}
\author{Ganbin Zhou$^{1,2}$, Ping Luo$^{1,2}$, Rongyu Cao$^{1,2}$, Yijun Xiao$^{3}$, Fen Lin$^{4}$, Bo Chen$^{4}$, Qing He$^{1,2}$ \\
	$^{1}$Key Lab of Intelligent Information Processing of Chinese Academy of Sciences (CAS), \\Institute of Computing Technology, CAS, Beijing 100190, China.\{zhouganbin, luop, heqing\}@ict.ac.cn\\
	$^{2}$University of Chinese Academy of Sciences, Beijing 100049, China. \\
	$^{3}$Department of Computer Science, University of California Santa Barbara, Santa Barbara, CA 93106, USA. \\
	$^{4}$WeChat Search Application Department, Tencent, China. \\
}
\begin{document}
	\maketitle
	\begin{abstract}
		Different from other sequential data, sentences in natural language are structured by linguistic grammars. Previous generative conversational models with chain-structured decoder ignore this structure in human language and might generate plausible responses with less satisfactory relevance and fluency. In this study, we aim to incorporate the results from linguistic analysis into the process of sentence generation for high-quality conversation generation. Specifically, we use a dependency parser to transform each response sentence into a dependency tree and construct a training corpus of \textit{sentence-tree} pairs. A tree-structured decoder is developed to learn the mapping from a sentence to its tree, where different types of hidden states are used to depict the local dependencies from an internal tree node to its children. For training acceleration, we propose a tree canonicalization method, which transforms trees into equivalent ternary trees.
		Then, with a proposed tree-structured search method, the model is able to generate the most probable responses in the form of dependency trees, which are finally flattened into sequences as the system output.
		Experimental results demonstrate that the proposed \textsc{X2Tree} framework outperforms baseline methods over 11.15\% increase of acceptance ratio.
	\end{abstract}
	
	\section{Introduction}
	Many natural language processing tasks can be formulated as sequence to sequence problems. Given a sequence of tokens, this task is to generate another sequence of tokens of equal or non-equal length. For example, machine translation models try to find a sequence of words in the target language, expressing the identical meaning to a source sentence; conversational models respond to a post utterance with a semantically coherent and grammatically correct sentence. Neural models were applied to these tasks and achieved state-of-the-art performances in recent years ~\cite{Cho2014,Shang2015,Vinyals2015ncm,Sordoni2015,Serban2015}.
	
	These neural models in essence use a chain-structured decoder to sequentially generate tokens given a context vector encoded from an input sequence. We notice that this decoding process is mostly linear, meaning that tokens are obtained in the order of their appearances. It basically considers the dependency between any word and all its preceding ones. RNN models, such as LSTM~\cite{Hochreiter1997} and GRU~\cite{Cho2014}, are developed in demand of capturing both the short and long-distance dependency over the chain structure.

	Our work improves upon these studies by incorporating the results from linguistic analysis into the decoder. Specifically, we leverage a dependency parser to transform each response sentence into a dependency tree, containing more local dependency information. The proposed model learns to map a sentence into a canonicalized tree, which is then flattened as the final output.
	Consider the intermediate task for automatic conversation generation.
	Instead of generating the response to a given input post directly, we aim to generate the \emph{dependency parse tree} of the corresponding response in top-down fashion.
	Additionally, a tree canonicalization method is proposed, aiming at transforming trees with different numbers of children nodes into their equivalent form, namely full ternary trees, in order to accelerate training and simplify model implementation  on GPU.
	Then, a postprocessing step converts the dependency tree into a sequence as the final response. We also theoretically prove that ternary tree is the ``best" choice for model complexity.

	Some models also process trees in a \emph{bottom-up} fashion.
	Socher et al.~\shortcite{DBLP:conf/icml/SocherLNM11} proposed a max-margin structure prediction architecture based on recursive neural networks, and demonstrated that it successfully parses sentences and understands scene images. Tai et al. ~\shortcite{Tai2015} and Zhu et al.~\shortcite{Zhu2015Long} extended the chain-structured LSTM to tree-structured LSTM, which is shown to be more effective in representing a tree structure as a latent vector. All these models process trees in a \emph{bottom-up} fashion, where children nodes are recursively merged into parent nodes until the root is generated.
	
	
	However, bottom-up models require all the leaf nodes in the predicted tree given in advance. For example, to generate the \emph{constituency parse tree} for a sentence (shown in Fig.~\ref{fig:constituency_parser}), tokens appeared in the given sentence are used as leaf nodes in this tree. Similarly, to parse natural scene images~\cite{DBLP:conf/icml/SocherLNM11}, an image is first divided into segments, each of which corresponds to one leaf node in output tree. With these given leaves  bottom-up process recursively processes the internal nodes until the root is built.
	
	\begin{figure}[htbp]
		\centering
		\subfigure[Constituency parser in bottom-up fashion.]{
			\label{fig:constituency_parser}
			\includegraphics[width=2.5in]{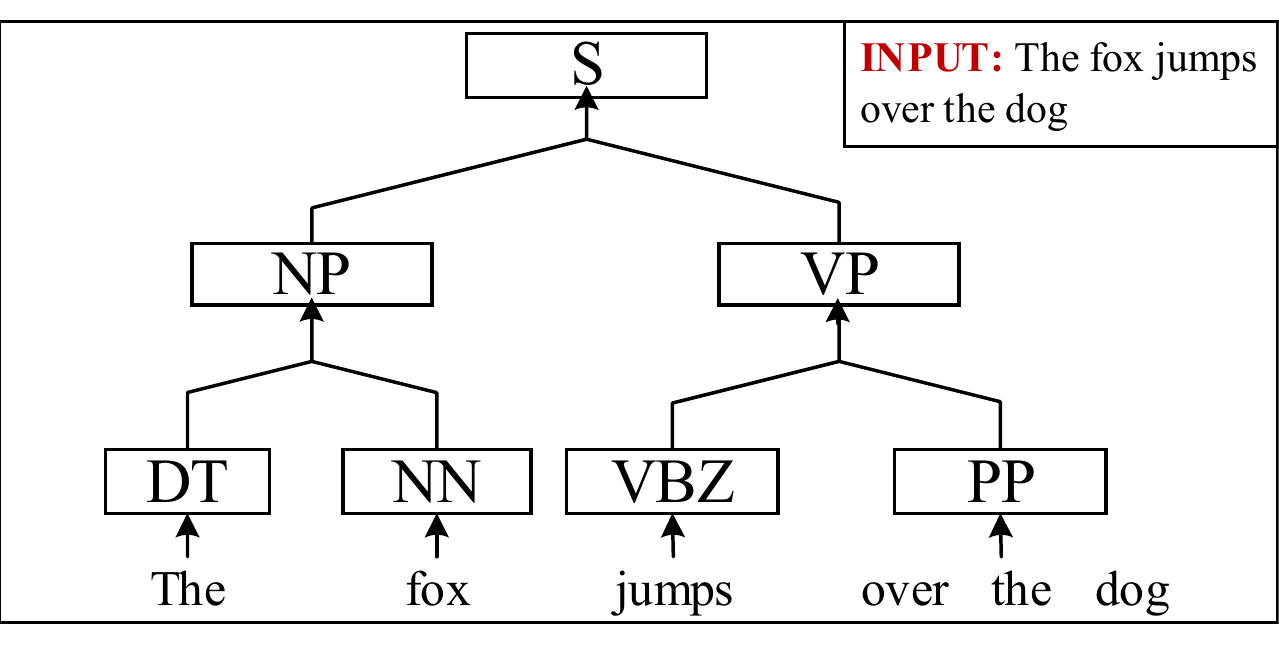}
		}
		\subfigure[Dependency parser in top-down fashion.]{
			\label{fig:dependency_parser}
			\includegraphics[width=2.5in]{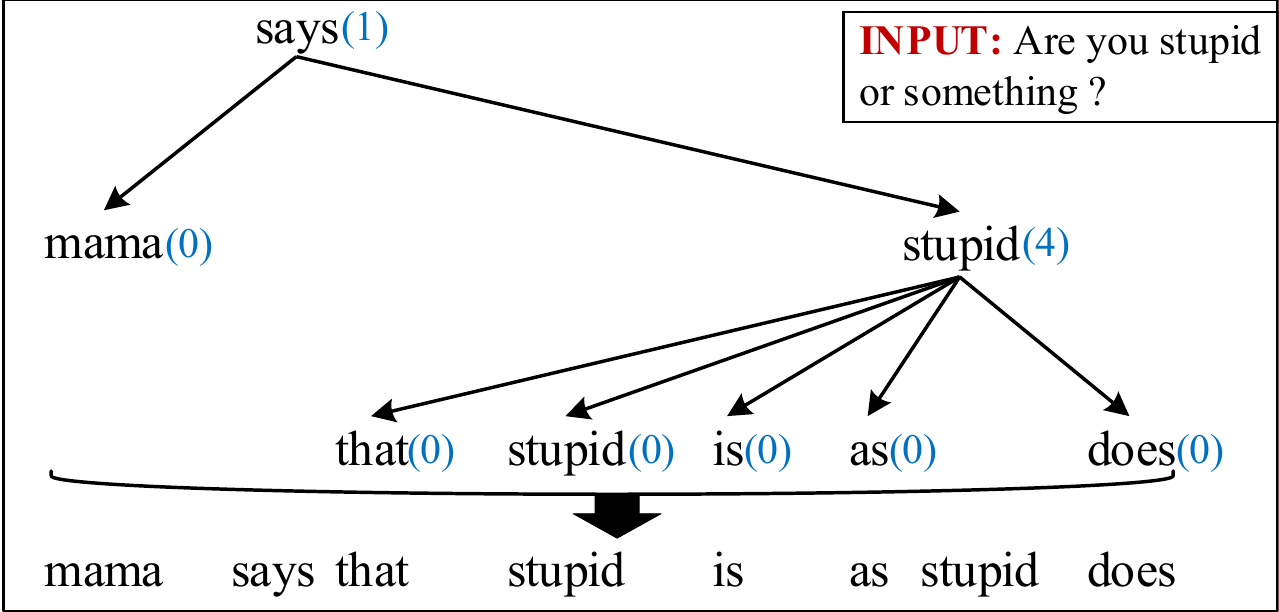}
		}
		\caption{Examples of two tree-structured prediction tasks in language understanding.}
		\label{fig:twoparser}
	\end{figure}
	
	Here, we argue that the bottom-up generative models may not work well when the leaf nodes are not specified ahead of prediction. Consider the task in Fig.~\ref{fig:dependency_parser}, which is an intermediate task for automatic conversation generation. Instead of generating the response to a given input post directly, we aim to generate the \emph{dependency parse tree} of the corresponding response. Then, a postprocessing step converts the dependency tree into a sequence as the final response. \footnote{The motivation of this solution is detailed in Section ~\emph{Tree Generation}.} Compared to the \textsc{Seq2Seq} solution to conversation generation, we argue that this tree-structured modeling method is more effective due to a shorter average decoding length and the extra structure information provided from the parse tree. In this task, it is clearly seen that: since all the tokens in response are not explicitly given by the input post, it may not be appropriate to generate the dependency from bottom to top.

	
	Previous works on tree-structured LSTM \cite{Tai2015,Zhu2015Long,Zhang2016Top} show that incorporating syntactic structures into the encoder or decoder results in sentence embedding with improved performances on tasks like sentiment analysis and semantic relatedness. In this paper, we propose to inject tree structures into the decoding process with the following motivations:
	
	
	1) Dependency tree parsing extracts short-distance dependencies in the local area of a sentence. Utilizing these linguistic results reduces the difficulty in sequential learning, thus helps decoders to generate grammatically and semantically correct utterances. Let $\bm{y}$ be the response sentence to an input $\bm{x}$, and $\mathcal{T}_{\bm{y}}$ be the dependency tree for $\bm{y}$. Then, the average length between any node in $\mathcal{T}_{\bm{y}}$ to its root is $O(\sqrt{|\bm{y}|})$ \cite{Flajolet1982}, much smaller than the sentence length $|\bm{y}|$. Thus, this tree transformation may alleviate the long-distance gap in sequence generation.
	2) Words in higher levels of the dependency tree usually are more influential for the sentence. By generating more ``important'' words in earlier stages of the decoding process, we essentially free the decoder from the burden to store important semantic information for many time steps.
	3) We also believe that the process of tree-structured sentence generation is more consistent with how human construct sentences. Although people speak a sentence in a sequential order, they may keep some keywords, such as verbs and nouns, in mind before filling in more descriptive adjectives and adverbs to generate a full sentence.

	In this paper, we develop a tree-structured decoder in the framework of ``\emph{X} to tree" (\textsc{X2Tree}) learning, where \emph{X} represents any structure (e.g. chain, tree) encoding the post as a latent vector. Since all the tokens in the response are not explicitly given by the input post, it is appropriate to generate the dependency from top to bottom. To this end, we need to address the following challenges:
	
	1) We need to carefully model the different dependencies between a tree node and its children. Children at different positions may have different meanings, and the generation of a child node depends on not only its parent and ancestors but also its siblings. Thus, we need to fully consider the memory inherited from both its ancestors and siblings (detailed in Section \emph{Generative Model for $K$-ary Full Tree}).
	
	2) A tree node could obtain any number of children. It is
	non-trivial to automatically determine the number of children.
	Furthermore, GPU-based parallel computing is difficult when the children number is different for each node. We therefore need a tree canonicalization process, which outputs an \emph{equivalent} standard tree, where each internal node has a fixed number of child nodes
	(detailed in Section \emph{Tree Canonicalization}).

	3) In model inference, it is required to develop a algorithm searching for the most probable trees instead of sequences. Since the beam search utilized by previous studies only handles chain structures, a more general search algorithm for tree structures needs to be developed (detailed in Section \emph{Tree Generation}).

	With all these challenges addressed, our main contributions are twofold: 1) We propose a  generative neural machine for tree structures, and apply it to conversational model. Specifically, we introduce a tree canonicalization method to standardize the generative process and a greedy search method for tree structure inference. 2) We empirically demonstrate that the proposed method successfully predict the dependency trees of conversational responses to an input post. Specifically, for the task of automatic conversation the proposed \textsc{X2Tree} framework achieves 11.15\% increase of acceptance ratio.
	
	It is also worth mentioning that we do not need a perfect dependency parser. In our task, the sequential sentence is the final output, while the dependency tree is only the immediate result. If the parsed tree contains errors in similar patterns, the model can learn these patterns. After we convert the generated tree into a sequence, the sequence may be still correct, which is also demonstrated by the experiments.

	\section{X2Tree Neural Network}
	In this section, we introduce the \textsc{X2Tree} learning framework. The training dataset is given as:
	$
	D=\{(\bm{x}, \mathcal{T}_{\bm{y}})\}
	$
	where $\bm{y}$ is the response of the post $\bm{x}$ and $\mathcal{T}_{\bm{y}}$ is the corresponding tree of $\bm{y}$, e.g. dependency tree.
	Our task is to learn the mapping from $\bm{x}$ to a tree structure $\mathcal{T}_{\bm{y}}$. Specifically, it adopts the encoder-decoder framework. We assume $\bm{x}$ has already been encoded as a latent vector \big(see e.g.~\cite{Sutskever2014,Zhu2015Long}\big), and mostly focus on the tree-structured decoder for the generation of $\mathcal{T}_{\bm{y}}$.

	
	As aforementioned, the developed decoder adopts a top-down generative process.
	The atom step is generating the children for a given node.
	This atom step is performed on each node until it cannot generate any valid nodes. 
	Thus, the key to the decoder is modeling the parent-children dependency.
	Note also that the model parameters for parent-children dependency are shared for all the atom steps in tree generation.

	\begin{figure}
		\centering
		\includegraphics[width=2.5in]{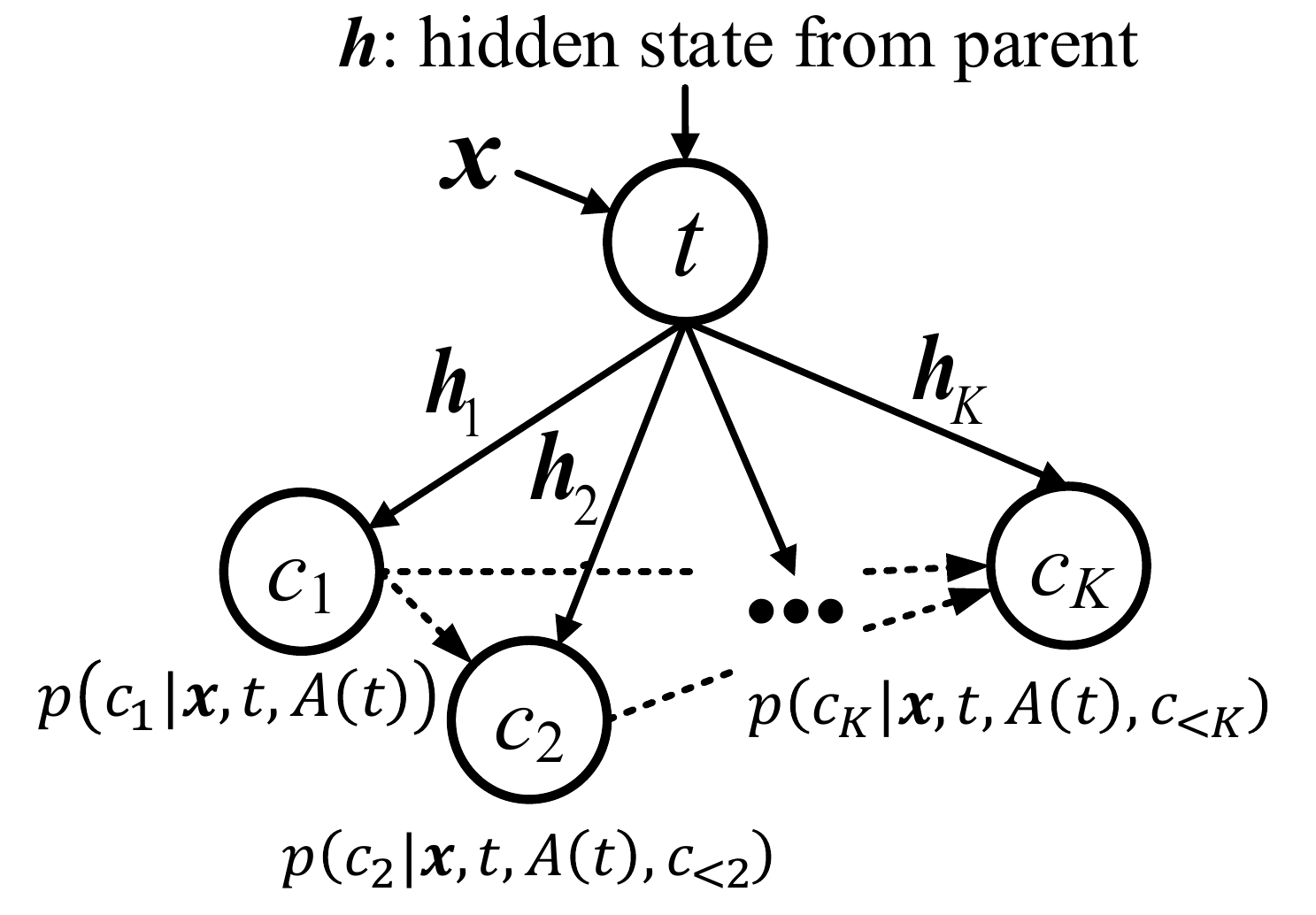}
		\caption{Parent-children dependency.}
		\label{fig:model_graph}
	\end{figure}

	We first assume the tree is $K$-ary full tree where every internal node has exactly $K$ children,
	and model  this type of tree in Section \emph{Generative Model for $K$-ary Full Tree}. Then, we propose a canonicalization method that transforms any tree into a $K$-ary full tree and discuss the $K$ for different applications in Section \emph{Tree Canonicalization}. Finally, we introduce an  algorithm for tree inference in Section \emph{Tree Generation}.



	\subsection{Generative Model for $K$-ary Full Tree}\label{sec:pcdm}
	
	Here, we propose a generative model for $K$-ary full tree. For simplicity, $\bm{x}$ also represents the latent vector encoded from the input post. Within the probabilistic learning framework, our main task is to express the conditional probability $p(\mathcal{T}|\bm{x})$ for a pair $(\bm{x}, \mathcal{T})\in D$.
	We can first reformulate $p(\mathcal{T}|\bm{x})$ as:
	\begin{equation}\label{equ:yxroot}
	\begin{aligned}
	p(\mathcal{T}|\bm{x})=p(t_{\textrm{r}}|\bm{x})\cdot p(\mathcal{T}_{\neg\textrm{r}}\mid t_{\textrm{r}},\bm{x})
	\end{aligned}
	\end{equation}
	where $t_{\textrm{r}}$ and  $\bm{\mathcal{T}}_{\neg\textrm{r}}$ denotes the root and the set of non-root nodes respectively.
	The first term $p(t_{\textrm{r}}|\bm{x})$ in Equ. (\ref{equ:yxroot}) is modeled as follows
	$
	p(t_{\textrm{r}}|\bm{x})=
	\frac{\exp g_{\textrm{r}}(t_{\textrm{r}},\bm{x})}{\sum_{v\in V}\exp g_{\textrm{r}}(v,\bm{x})}
	$
	where $g_\textrm{r}$ is a nonlinear and potentially multi-layered function, and $V$ is the vocabulary containing all possible values for the discrete random variables.

	To model $p(\mathcal{T}_{\neg\textrm{r}}\mid t_{\textrm{r}},\bm{x})$, we make the following conditional independence assumption:
	\begin{myApt}\label{as:local}
		The children of different nodes are conditionally independent given their ancestors. \label{apt:ind}
	\end{myApt}
	
	With assumption \ref{as:local}, $p(\mathcal{T}_{\neg\textrm{r}}| t_{\textrm{r}},\bm{x})$ is decomposed as:
	\begin{equation}
	p(\mathcal{T}_{\neg\textrm{r}}| t_{\textrm{r}},\bm{x})
	=
	\prod_{t\in \mathcal{T}} p\big(C(t)\mid \bm{x}, t, {A(t)}\big)
	\end{equation}
	where $C(t)=(c_1,c_2,\cdots,c_K)$ denotes the set of $t$'s children, and $A(t)$ denotes all $t$'s ancestors.

	%
	%
	
	We then move to model the conditional probability $p\big(C(t) | \bm{x}, t, {A(t)}\big)$.
	Concretely, since the child nodes to a parent usually correlate with each other, it is inappropriate to assume conditional independence among them.
	Thus, the  probability $p\big(C(t)| \bm{x}, t, {A(t)}\big)$ is then decomposed into the following ordered conditional probabilities:
	\begin{equation}
	\begin{aligned}
	&p\big(C(t)| \bm{x}, t, {A(t)}\big)\!=\!\prod_{c_k\in C(t)}p\big(c_k|\bm{x}, t, {A(t)},  c_{<k}\big)
	\end{aligned}
	\end{equation}
	Furthermore, we argue that children at different positions obtain different underlying meanings. Hence, $K$ different types of hidden states are designed for the $K$ children of node $t$:
	\begin{equation}
	\bm{h}_k=f_k(t,\bm{h},\bm{x}) ~, k=1,2,\dots,K
	\end{equation}
	where $\{f_k\}_{k=1}^K$ are activation functions which can be  LSTM or other RNN cells. $\bm{h}$ denotes the hidden state fed to node $t$, containing the memory from $t$'s ancestors
	, and $\bm{h}_{r}=\bm{0}$ for the root node.
	With $\bm{h}_k$, we define $p\big(c_k\mid\bm{x}, t, {A(t)},  c_{<k}\big)$ as follows:
	\begin{equation}
	\begin{aligned}
	p\big(c_k\mid\bm{x}, t, {A(t)},  c_{<k}\big)
	&=\!\frac{\exp g_k\big(c_k,\bm{x},t,\bm{h}_k,\tilde{\bm{c}}_{k-1}\big)}{\sum_{v\in V}\exp g_k\big(v,\bm{x},t,\bm{h}_k,\tilde{\bm{c}}_{k-1}\big)}
	\\
	\tilde{\bm{c}}_{k-1}&=[c_1;c_2;\cdots;c_{k-1}]
	\end{aligned}
	\end{equation}
	where $\tilde{\bm{c}}_{k-1}$ is the concatenation of $\{c_i\}_{i=1}^{k-1}$.
	
	Modeling of parent-children dependency is summarized in Fig.\ref{fig:model_graph}. With all these modelings, we train the \textsc{X2Tree} model by maximizing the data likelihood, namely
	\begin{equation}
	\begin{aligned}
	&\prod_{(\bm{x}, \mathcal{T})\in D}p(\mathcal{T}|\bm{x})\\ &
	=\prod_{(\bm{x}, \mathcal{T})\in D}p(t_{\textrm{r}}|\bm{x})
	\prod_{t \in \mathcal{T}}\prod_{c_k\in C(t)}p\big(c_k\mid\bm{x}, t, {A(t)},  c_{<k}\big)
	\end{aligned}
	\end{equation}
	It is worth mentioning that in order to explicitly notify the end of tree generation we need to add the special token ``\textsc{eob}'' (short for ``End Of Branch'') to the leaf nodes as their children. Hence, all the leaf nodes of the tree in the training dataset are \textsc{eob} nodes.



	\subsection{Tree Canonicalization}\label{sec:tc}
	\begin{figure}
		\centering
		\subfigure[a-b-c]{
			\includegraphics[width=0.7in]{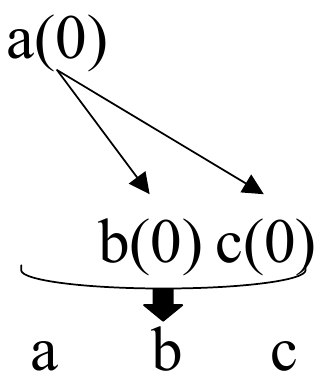}
		}
		\subfigure[b-a-c]{
			\includegraphics[width=0.7in]{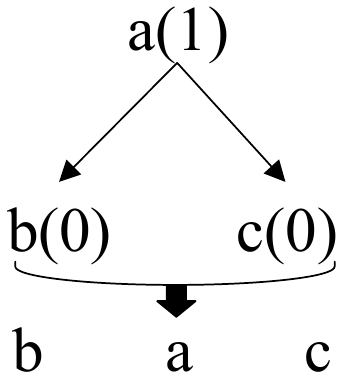}
		}
		\subfigure[b-c-a]{
			\includegraphics[width=0.7in]{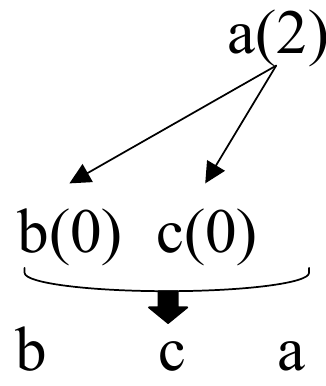}
		}
		\caption{Three different sequence-preserved trees.}
		\label{fig:SPTrees}
	\end{figure}
	
	As aforementioned, the proposed \textsc{X2Tree} model requires that the tree is $K$-ary full tree. Whereas for dialogue generation task a response sentence can be parsed into a dependency tree with any number of child nodes at each level.
	During training and generating, it is difficult to determine the child-node number of a word. Additionally, variable-length data is tricky for GPU acceleration.
	Hence, the original dependency tree is canonicalized into $K$-ary full tree before training.


	Basically, the transformed $K$-ary full tree should be equivalent to the original one. In other words, there must exist an algorithm to support the bi-directional transformation between a tree and its $K$-ary full counterpart. Considering the number of $K$ is linear to the number of model parameters, to reduce model complexity, we usually hope $K$ to be as small as possible.
	For a given tree, a simple method to transform it into a full tree is to fill all the empty positions with \textsc{eob} nodes. With this method,
	every tree node obtains $K$ children where $K$ is the maximal number of immediate children over all tree nodes.
	However, when $K$ is large and the tree nodes are sparse, the redundant \textsc{eob} nodes significantly increase the learning complexity. Hence, ideally, before the \textsc{eob} filling step we want to transform the tree into a binary or ternary tree.

	
	
	\begin{figure}
		\centering
		\includegraphics[width=3in]{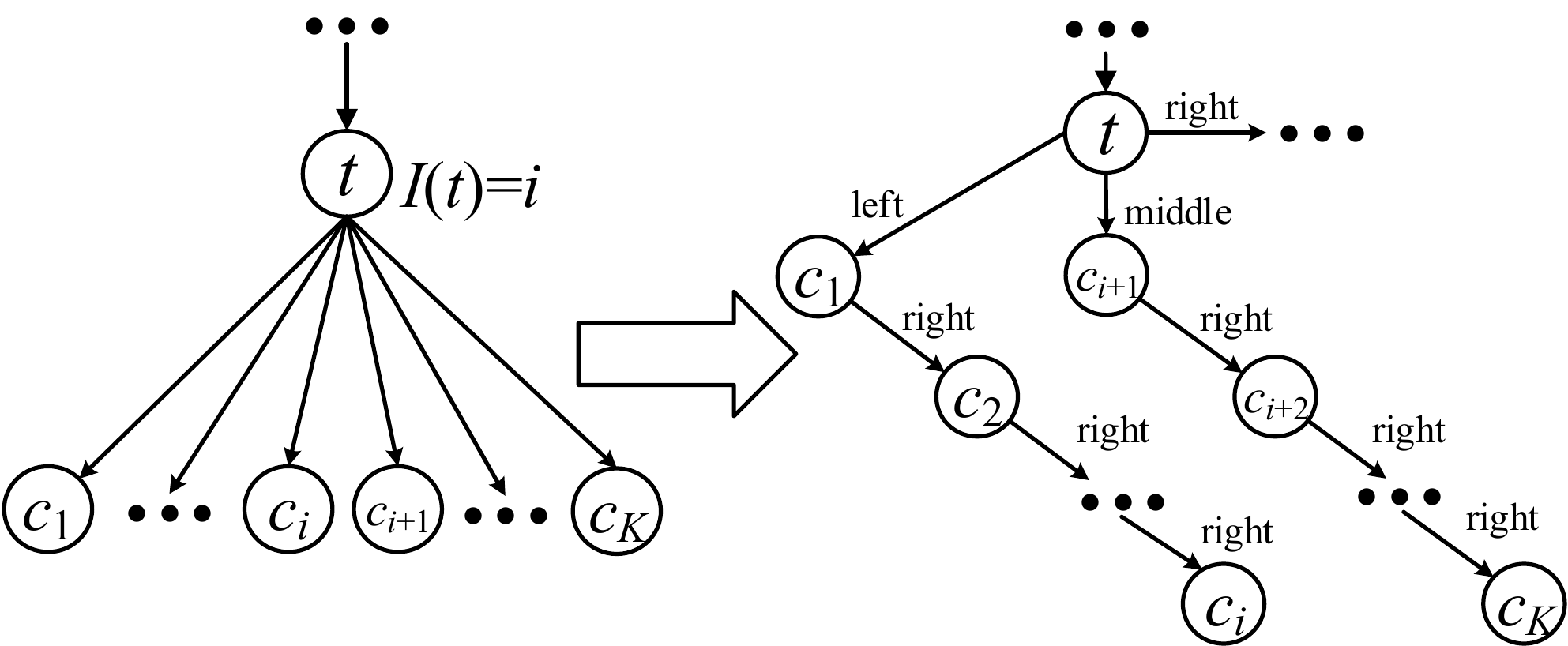}
		\caption{The canonicalization for node $t$.}
		\label{fig:ternary_can}
	\end{figure}

	Here, we mainly consider two scenarios. For an ordered tree, where ordering is specified for the children of each node, we transform it to a left-child right-sibling (LCRS) binary tree~\cite{cormen2009introduction}. This transformation is reversible with a one-to-one mapping between the ordered tree and its LCRS counterpart.
	Furthermore, for the conversational generation tasks, we need to flatten the predicted tree into a sequence. Therefore we need to store position information in the dependency tree. For this purpose we first give the following definition of \emph{sequence-preserved tree} (SP tree for short).

	\begin{myDef}
		An SP tree is an ordered tree where each node $t$ is tagged an integer $I(t)\in\{0,1,\cdots,K\}$, and $K$ is the number of children to node $t$.
	\end{myDef}

	The in-order traversal of an SP tree corresponds to a node sequence. Node $t$'s children are divided into two parts. The left part contains the first $I(t)$ nodes (child nodes are ordered from left to right), while the right part contains the remaining nodes. In the in-order traversal we first visit the nodes in the left  part, then the current node, finally the right part. Fig. \ref{fig:SPTrees} shows three SP trees with their corresponding sequences.
	Obviously, the dependency tree of a sentence is an SP tree, where the number attached on each node can be obtained by checking the position relationship of the node and its children in the original sentence, as shown in Figure~\ref{fig:dependency_parser}. For example, the node ``says'' obtains a number ``$1$'' which means one child of this node are on its left part in the original sequence.
	As discussed earlier, a tree canonicalization step is needed to transform the original dependency tree into a $K$-ary full tree. To preserve sequence order, we transform the dependency into a ternary tree. We now present the algorithm and discuss why ternary tree is the ``best" choice.
	Alg.~\ref{alg:tranformTriple} details this canonicalization process, and an illustration is shown in Fig.\ref{fig:ternary_can}.
	
	In a ternary tree, each node has three children, namely left, middle and right nodes. For node $t$ with attached number $I(t)$, Alg.~\ref{alg:tranformTriple} first determines its left and middle child in the ternary tree. Specifically, its left child is set to $c_1$, the first child in the original tree; and its middle child is set to $c_{I(t)+1}$. Any other child $c_j \big(j\neq 1\text{ and }j\neq I(t)+1\big)$ is set as the right child of $c_{j-1}$ recursively. With this ternary tree a simple in-order traversal in the order of left child, parent, middle child and right child can restore it into a sequence.

	\begin{algorithm}[H]\small
		\caption{$\textsc{Canonicalize}$}
		\label{alg:tranformTriple}
		\begin{algorithmic}[1]
			\REQUIRE A node of SP tree, $t$\\
			\ENSURE Ternary tree node corresponding to  $t$, $t'$\\
			\STATE Let $c_1,c_2,\cdots,c_n$ denote $t$'s children;
			\STATE Create an new node $t'=t$;
			\FOR{$j\leftarrow 1$ {\bfseries to} $n$}\label{step:leftfor}
			\STATE currentNode $\leftarrow\textsc{Canonicalize}(c_j)$;
			\IF{$j=1$ \AND $j\leqslant I(t)$}
			\STATE  $t'.\textrm{leftChild} \leftarrow$currentNode;
			\ELSIF{$j=I(t)+1$}
			\STATE  $t'.\textrm{middleChild} \leftarrow$currentNode;
			\ELSE
			\STATE  lastNode$.\textrm{rightChild}\leftarrow$currentNode;
			\ENDIF
			\STATE lastNode $\leftarrow$ currentNode;
			\ENDFOR\label{step:leftforend}
			\STATE {\bfseries return} $t'$;
		\end{algorithmic}
	\end{algorithm}

	Next, we prove that the resulting ternary tree is equivalent to the original SP tree in the sense that they can be transformed into each other.
	\begin{myTheo}\label{tho:1}
		Given any SP tree $\mathcal{T}$, it can be transformed into a ternary tree $\mathcal{T}'$, and $\mathcal{T}'$ can be transformed back into the original tree $\mathcal{T}$.
	\end{myTheo}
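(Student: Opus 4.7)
The plan is to prove the theorem in two steps: first exhibit an inverse procedure $\textsc{Decanonicalize}$ that reconstructs the SP tree from the ternary tree produced by $\textsc{Canonicalize}$, then show by structural induction that $\textsc{Decanonicalize}\circ\textsc{Canonicalize}$ is the identity on SP trees. Since the forward direction is already given by Algorithm~\ref{alg:tranformTriple}, the existence of a bijective reconstruction is exactly what is needed.

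To set up the inverse, I would first read off the structural invariant produced by $\textsc{Canonicalize}$ at a node $t$ with children $c_1,\dots,c_n$ and tag $I(t)$: when $I(t)\geq 1$, the left child of $t'$ is $c_1$ and the chain obtained by repeatedly following \textrm{rightChild} from $c_1$ yields $c_2,\dots,c_{I(t)}$; when $I(t)<n$, the middle child of $t'$ is $c_{I(t)+1}$ and the chain from it yields $c_{I(t)+2},\dots,c_n$; when $I(t)=0$ the left child is absent, and when $I(t)=n$ the middle child is absent. Given $t'$, the inverse procedure therefore does the following: let $L$ be the length of the right-chain rooted at $t'.\mathrm{leftChild}$ (zero if the left child is absent) and $M$ the length of the chain rooted at $t'.\mathrm{middleChild}$ (zero if absent); set $I(t):=L$, $n:=L+M$, and let $c_1,\dots,c_L$ be the left chain and $c_{L+1},\dots,c_{L+M}$ the middle chain, in order. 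Recurse on each reconstructed $c_j$ after detaching its \textrm{rightChild} pointer.

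The main step is then structural induction on the SP tree $\mathcal{T}$. The base case is a leaf, where Algorithm~\ref{alg:tranformTriple} skips its loop and returns a copy of $t$ with no left/middle/right children; the inverse sees $L=M=0$, sets $I(t)=0$, and returns the same leaf. For the inductive step, assume the claim for all proper subtrees. Inspecting the loop of $\textsc{Canonicalize}$ case by case ($j=1$ with $I(t)\geq 1$; $j=I(t)+1$; all remaining $j$) verifies the structural invariant above; the inverse therefore recovers $I(t)$, the correct arity $n$, and the ordered list of children roots $(c_1,\dots,c_n)$. By the inductive hypothesis applied to each $c_j$, the reconstructed subtrees coincide with those of $\mathcal{T}$, hence $\mathcal{T}'$ maps back to $\mathcal{T}$.

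The main obstacle I expect is not depth but bookkeeping around the edge cases $I(t)=0$ and $I(t)=n$, where either the left or middle child is absent, together with the need to argue that no ambiguity arises when distinguishing ``end of left chain'' from ``middle child exists'': this is handled cleanly because $L$ is read purely from the leftChild/rightChild backbone of $t'$ without reference to $M$, so the split point is unambiguous. A brief remark at the end would justify why ternary is minimal among SP-tree-preserving canonicalizations: a binary tree has only two child slots, which is insufficient to simultaneously encode the split between the $I(t)$ left-siblings and the $n-I(t)$ right-siblings while also recursing on each child, so $K=3$ is the smallest arity for which this construction works.
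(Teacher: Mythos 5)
Your proposal is correct and takes essentially the same approach as the paper: both invert the canonicalization by following the \textrm{rightChild} chains hanging off the left and middle children to recover each node's ordered siblings and the tag $I(t)$ (the paper phrases this per non-right-child node and its right-chain, you phrase it top-down per parent, recovering $I(t)$ as the left-chain length, with an explicit structural induction). The extra rigor in your write-up (the induction and the $I(t)=0$, $I(t)=n$ edge cases) is left implicit in the paper's proof, and your closing minimality remark is the subject of the paper's separate Theorem~\ref{tho:2}, which is proved there by a counting argument rather than your slot-counting heuristic.
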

	
	\begin{proof}
		Using the Alg.2, we can transform $\mathcal{T}$ into a ternary tree $\mathcal{T}'$.
		
		We now show how to transform $\mathcal{T}'$ back into $\mathcal{T}$. For each node $t\in\mathcal{T}'$, if $t$ is not a right child, let $r_1$ denote the right child of $t$, $r_2$ denote the right child of $r_1$, $r_n$ denote the right child of $r_{n-1}$ until $r_n$ obtains no right child.
		
		In the original tree $\mathcal{T}$, $t$ and $\{r_j\}_{j=1}^n$ must be siblings. For simplicity, let $t'$ denote their parent.
		
		1) If $t$ is a left child in $\mathcal{T}'$,  ($t$ , $r_1$ , $\cdots$, $r_n$) are
		first, second, \dots,  $(n+1)$-th child of $t'$ in the original SP tree $\mathcal{T}$.
		
		2) If $t$ is a middle child in $\mathcal{T}'$, ($t$ , $r_1$ , $\cdots$, $r_n$) are $\big(I(t')+1\big)$-th, $\big(I(t')+2\big)$-th, \dots, $\big(I(t')+n+1\big)$-th child of $t'$ in the original SP tree $\mathcal{T}$.
		
		In this way, for each node in $\mathcal{T}'$, we can find its original position in $\mathcal{T}$, and then re-converts $\mathcal{T}'$ to  $\mathcal{T}$.
		
	\end{proof}

	\begin{figure*}[htbp]
		\centering
		\subfigure[The original tree ``I-am-a''.]{
			\label{fig:onestepGBS1}
			\includegraphics[width=1.2in]{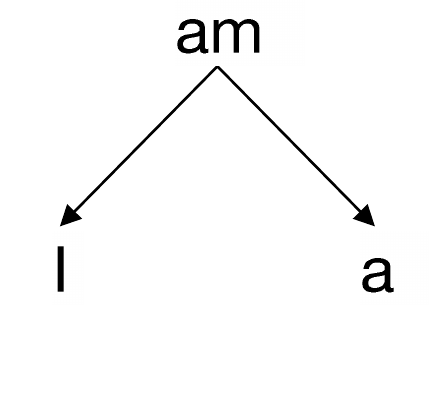}
		}
		\subfigure[Extending the leave node ``I''.]{
			\label{fig:onestepGBS2}
			\includegraphics[width=2.6in]{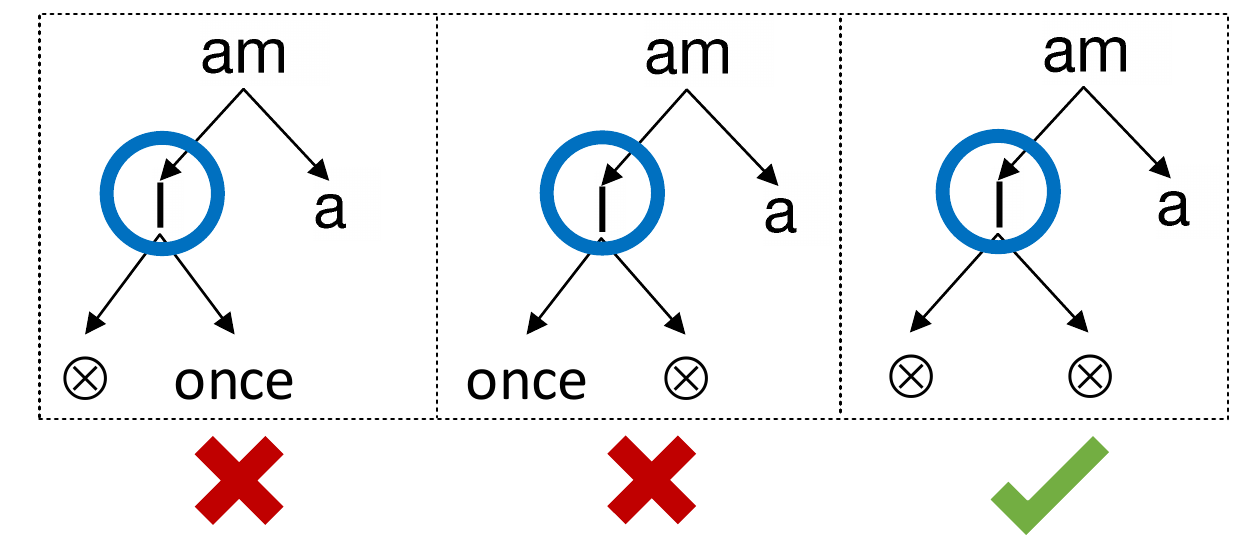}
		}
		\subfigure[Extending the leave node ``a''.]{
			\label{fig:onestepGBS3}
			\includegraphics[width=2.6in]{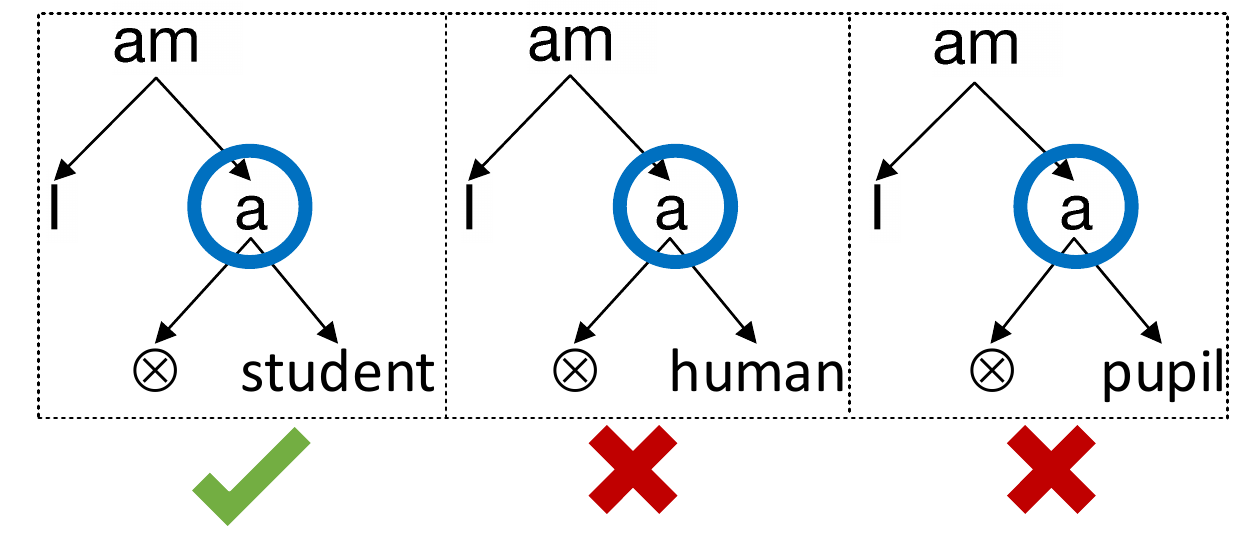}
		}
		\caption{Examples of one step of generalized beam search. The Fig.(a) shows the original tree. The Fig. (b) and (c) show the searching results. Note that words in double quotes are to be expanded.  Here, ``$\otimes$'' denotes special token ``\textsc{eob}''.}\label{fig:onestepGBS}
	\end{figure*}

	

	
	Additionally, we prove that ternary tree is the ``best" choice for model complexity. Theoretically, a dependency tree is equivalent to a $K$-ary tree when $K\geqslant 3$. Since the number of $K$ is linear to parameter size in the \textsc{X2Tree} model, we prefer simpler models with smaller values of $K$. Theorem~\ref{tho:2} formally shows that SP trees are \emph{not} equivalent to binary trees. Therefore, the ternary tree is the ``best" choice.
	Thus before training, we perform a preprocessing step which converts each response into its corresponding dependency tree (instance of SP tree), and canonicalize them into ternary trees.
	\begin{myTheo}\label{tho:2}
		Given any SP tree $\mathcal{T}$, no algorithm exists which transforms $\mathcal{T}$ into an LCRS tree $\mathcal{T'}$ and re-converts $\mathcal{T'}$ to $\mathcal{T}$.
	\end{myTheo}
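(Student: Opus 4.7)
My plan is to prove Theorem~\ref{tho:2} via a pigeonhole (counting) argument: I will exhibit a node count $n$ for which the number of distinct SP trees on $n$ nodes strictly exceeds the number of distinct LCRS binary trees on $n$ nodes. Any forward transformation would then be forced to collapse two distinct SP trees to the same LCRS image, so no reverse algorithm can recover the original from that image in every case, contradicting the existence of a reversible algorithm pair.

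The cleanest witness appears already at $n = 2$. There is exactly one ordered tree shape on two nodes, namely a root with a single child, and under Knuth's standard LCRS correspondence cited in the paper this yields exactly one LCRS binary tree (the root with a left child, and a null right-sibling pointer since the root has no siblings). On the SP tree side, however, the root has one child and therefore carries a tag $I(\text{root}) \in \{0, 1\}$, producing two genuinely distinct SP trees whose in-order traversals differ (parent-child versus child-parent), while the leaf is forced to carry $I = 0$. Hence there are two SP trees but only one LCRS binary tree on two nodes, and any map between them must identify the two SP trees. For general $n$ I would record the following strengthening: the number of ordered trees on $n$ nodes equals $C_{n-1}$, which already matches the LCRS count under Knuth's bijection, whereas the SP count is $\sum_{\text{ordered shapes}} \prod_t (|C(t)|+1)$, strictly larger whenever some internal node has at least one child.

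The main obstacle I foresee is not the arithmetic but pinning down the ambient sets precisely, since the theorem quantifies over all algorithms. In particular one must interpret "LCRS tree" in the standard sense of \cite{cormen2009introduction} (root's right-sibling pointer is null), because otherwise one could trivially smuggle the single $I(\text{root})$ bit by placing the child on the left or right of the root, breaking the pigeonhole inequality at $n = 2$. Under the standard LCRS convention this escape hatch is closed and the counting inequality is strict, so the $n = 2$ witness already furnishes two SP trees that no forward algorithm can separate after LCRS conversion, which suffices to defeat any candidate reverse algorithm and establish the theorem.
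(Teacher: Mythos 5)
Your proposal is correct and takes essentially the same approach as the paper: both are pigeonhole arguments that combine the ordered-tree/LCRS bijection of \cite{cormen2009introduction} (giving $|\mathcal{O}^n|=|\mathcal{L}^n|$) with the extra multiplicity contributed by the tags $I(t)$ (giving $|\mathcal{S}^n|>|\mathcal{O}^n|$), so that no injective, hence no reversible, transformation can exist. Your explicit $n=2$ witness and your caveat that the root's right-sibling pointer must be null are just concrete instantiations and sharpenings of the paper's counting argument, not a different method.
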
	
	\begin{proof}
		Let $\mathcal{S}^n$, $\mathcal{O}^n$ and $\mathcal{L}^n$ respectively denote the set of sequence-preserved trees, ordered trees and LCRS trees with $n$ nodes.
		Since the ordered trees and LCRS trees obtain one-to-one correspondence~\cite{cormen2009introduction},
		it can be inferred that the element number $|\mathcal{O}^n|=|\mathcal{L}^n|$.
		
		For a node $t$ in ordered tree, if $t$ and its children obtain specified ordering, namely $I(t)$ is defined, it converts to a SP tree.
		Furthermore, for different $I(t)$, the SP trees are different.
		Thus, $|\mathcal{S}^n|>|\mathcal{O}^n|=|\mathcal{L}^n|$.
		Moreover, suppose that an algorithm exists that transforms $\mathcal{T}$ into an LCRS tree $\mathcal{T}'$, and re-converts $\mathcal{T'}$ to $\mathcal{T}$. This infers that $|\mathcal{S}^n|\leqslant|\mathcal{L}^n|$. It is contradictory to $|\mathcal{S}^n|>|\mathcal{L}^n|$.
		
	\end{proof}
	
	Note that the generated tree is a full tree (the leaf nodes can be in the lower depth) but not a perfect tree. For a sentence with n words, the transformed k-ary full tree contains exactly $(kn+1)$ nodes, and the extra $(kn+1-n)$ nodes are the EOB tokens. Thus, only the $(kn+1-n)$ nodes induce the computing waste. To minimize this waste we expect k as small as possible. Theorems 1 and 2 tell that $k=3$ is the minimal number we can use so that the transformed tree is equivalent to the original dependency tree. 
	

	
	\subsection{Tree Generation}\label{sec:tg}
	
	With the trained model we can infer the most probable trees for a given input $\bm{x}$. In this section we develop a greedy search algorithm for this inference task.

	The beam search is traditionally adopted for sequence structure generation
	At each step, it keeps $G$ (called \emph{global} beam size) best candidates with the maximal probabilities so far. Then, only those candidates are expanded next. For each candidate on the beam it grows a new node at the current end of the sequence. This process repeats recursively until all candidates end with \textsc{eob} nodes. 

	

	\begin{algorithm}[H]\small
		\caption{$\textsc{GeneralizedBeamSearch}$}
		\label{alg:Expand}
		\begin{algorithmic}[1]
			\REQUIRE ~~\\
			latent vector, $\bm{x}$ \\
			global beam size, $G$ \\
			local beam size, $L$ \\
			child number of each node, $K$ \\
			\ENSURE A set of trees, $\mathcal{R}$
			\STATE $\mathcal{S}$ $\leftarrow$ \{$G$ roots with highest $p(t_{\textrm{r}}|\bm{x})$\}; $\mathcal{R}\leftarrow\phi$;
			\WHILE{$|\mathcal{R}|<G$}
			\FOR{\textbf{each} $\mathcal{T}\in\mathcal{S}$}
			\FOR{\textbf{each} leaf $t \in \mathcal{T}$}
			\IF{$t = \textsc{eob}$}
			\STATE \textbf{continue};
			\ENDIF
			\STATE Via chain beam search find $L$ groups of $C(t)$ by maximizing $p\big(C(t)|\bm{x}, t, {A(t)}\big)$;
			\FOR{\textbf{each} $C(t)$}
			\STATE Connect $C(t)$ to $\mathcal{T}$ as a new tree $\mathcal{T}'$;
			\STATE Add $\mathcal{T}'$ to $\mathcal{S}$;
			\ENDFOR
			\ENDFOR
			\STATE Delete $\mathcal{T}$ from $\mathcal{S}$;
			\ENDFOR
			\STATE $\mathcal{S} \leftarrow$  \{$G$ trees  with highest $p(\mathcal{T}|\bm{x})$ in $\mathcal{S}$\};
			\FOR{\textbf{each} $\mathcal{T} \in \mathcal{S}$}
			\IF{$\mathcal{T}$'s leaves are all \textsc{eob}s}
			\STATE Add $\mathcal{T}$ to $\mathcal{R}$;
			\ENDIF
			\ENDFOR
			\ENDWHILE
			\STATE $\mathcal{R} \leftarrow$  \{$G$ trees  with highest $p(\mathcal{T}|\bm{x})$ in $\mathcal{R}$\};
			\STATE {\bfseries return} $\mathcal{R}$;
		\end{algorithmic}
	\end{algorithm}

	Since sequence is a special case of trees, searching tree generation has more challenges to address. First, an arbitrary tree has multiple leaves which could potentially generate new children. Second, when growing new children for a leaf node we need to generate all children as a whole since they correlate with each other (as mentioned in Section \emph{Generative Model for $K$-ary Full Tree}). Multiple groups of such $K$ children need to be generated as the best candidates. 
	
	We use the example in Fig.~\ref{fig:onestepGBS} to describe this tree generation method. The original tree has two leaves, nodes ``\emph{i}" and ``\emph{a}". For each of these leaves, we can generate new children. Specifically, for node ``\emph{i}" it generates $L$ groups of $K$ children, as shown in Fig.~\ref{fig:onestepGBS2} ($L=3$ and $K=2$ in this example). Since these new children are ordered, this local step of children generation is actually a task of sequence generation, thus the conventional beam search can be used. Here, $L$ (called \emph{local} beam size) is to specify the number of candidate sequences generated for each leaf. After the child generation for all the leaves, we compare all these candidate trees and only retain top-$G$ ($G=2$ in this example) trees for the next round of generation. This process recursively continues until all the leaves in the tree are \textsc{eob} nodes.
	Note that the proposed method is a generalized beam search.
	Beam search for sequence generation is a special case with $K=1$, since sequence is equivalent to $1$-ary tree.
	The method is detailed in Algorithm~\ref{alg:Expand}.
	
	\section{Experiment Settings}
	
	\subsection{Dataset Details}\label{sec:dataset}
	Our experiments focus on dialogue generation task. 14 million post-response pairs were obtained from Tencent Weibo\footnote{http://t.qq.com/?lang=en\_US}. After removing spams and advertisements, $815,852$ pairs were left, among which $775,852$ are for training, and $40,000$ for model validation.
	
	\subsection{Benchmark Methods}
	We implemented the following four popular neural-based dialogue models for comparison:
	\begin{enumerate}
		\item  \textsc{Seq2Seq}\cite{Sutskever2014}: A RNN model that utilizes the last hidden state of the encoder as the initial hidden state of the decoder;
		\item  \textsc{EncDec}\cite{Cho2014}: A RNN model that feeds the last hidden state of the encoder to every cell and softmax unit of the decoder;
		\item  \textsc{ATT}\cite{Bahdanau2014Neural}: A RNN model based on \textsc{EncDec} with attention signal;
		\item  NRM\cite{Shang2015}: Neural Responding Machine with both global and local schemes.
	\end{enumerate}
	All these models map sequences to sequences directly,
	and only differ in how to summarize the encoder hidden states into a latent vector.
	Thus, the proposed tree decoder can be applied to any of these models, and potentially improve the response quality from a different perspective.
	Here, we stress that this tree-decoder can be easily applied to the model~\cite{Serban2015}, which summarizes multiple rounds of dialogues into a latent vector. In the future, tree decoder for multi-round dialog will be evaluated.
	
	\subsection{Implementation Details}
	All sentences in the experiments are segmented by LTP\footnote{https://www.ltp-cloud.com/intro/en/}. A vocabulary of 28,000 most frequent Chinese words in the corpus is used for training, which contains 97\% words.
	Out-of-vocabulary words are replaced with ``\textsc{unk}''.
	Our implementations are based on the Theano library~\cite{bastien2012theano} over NVIDIA K80 GPU.
	We applied one-layer GRU~\cite{Cho2014} with 1,024-dimensional hidden states to $\{f_k\}_{k=1}^K$ and all baseline models. As suggested in~\cite{Shang2015}, the word embeddings for the encoders and decoders are learned separately, whose dimensions are set to 128 for all models.
	All the parameters were initialized using a uniform distribution between -0.01 and 0.01. In training, the mini-batch size is $128$. We used ADADELTA~\cite{Zeiler2012ADADELTA} for optimization. The training stops if the perplexity on the validation set increases for 4 consecutive epochs. Models with best perplexities are selected for further evaluation.
	When generating responses, for \textsc{X2Tree} we use generalized beam search with global beam size $G=6$, local beam size $L=6$.
	For other $\textsc{X2Seq}$ baseline models, conventional beam search with beam size $200$ is used.
	
	
	\subsection{Evaluation Methods}
	Due to the high diversity nature of dialogs, it is practically impossible to construct a data set which adequately covers all responses for each given post.
	Hence, we apply human judgment to our experiments. In detail, 3 labelers were invited to evaluate the quality of responses to $300$ randomly sampled posts.
	For each post, each model generated top-$5$ different responses (for a total of $25$). For fair comparison, we create a single file in which each post is followed by its $25$ responses which are shuffled to avoid labelers knowing which model each response is generated by.
	
	For each response the labelers determine the quality to be one of the following three levels:
	\begin{itemize}[noitemsep,topsep=0pt,parsep=0pt,partopsep=0pt]
		\item \textbf{Level 1}: The response is ungrammatical.
		\item \textbf{Level 2}: The response is basically grammatical but irrelevant to the input post.
		\item \textbf{Level 3}: The response is grammatical and relevant to the input post. The response on this level is acceptable for dialog system.
	\end{itemize}
	From labeling results, average percentages of responses in different levels are calculated. Additionally, labeling agreement is evaluated by Fleiss' kappa~\cite{Fleiss1971Measuring} which is a measure of inter-rater consistency.
	Furthermore, we also report BLEU-4~\cite{Papineni2002} scores for these 300 posts.
	Since some researchers indicate BLEU may not be a good measure for dialog evaluation\cite{Liu2016}, we consider human judgment as a major measure in experiments. 
	
	\subsection{Experimental Results and Analysis}
	
	The experimental results are summarized in Table \ref{table:human_judge}. 
	For \textsc{Seq2Seq}, \textsc{NRM} and \textsc{X2Tree}, the agreement value is in a range from 0.6 to 0.8 which is interpreted as ``substantial agreement''.
	Meanwhile, \textsc{EncDec} and \textsc{ATT} obtain a relatively higher kappa value between 0.8 to 1.0 which is ``almost perfect agreement''.
	Hence, we believe the labeling standard is considered clear which leads to high agreement among labelers.
	
	\begin{table}[htbp]\small\setlength{\tabcolsep}{1.2pt}
		\centering
		\caption{The results from human judgment.}\label{table:human_judge}
		\begin{tabular}{l||c|c|c||c||c}
			\hline
			\hline		
			\textbf{Models} &  {Level-1}\% & {Level-2}\% & {Level-3}\% & {Agreement} & BLEU\\
			\hline
			\textsc{EncDec} & 0.44	& 58.89 &	40.67 & 0.8114 & 8.78\\
			\textsc{Seq2Seq} & 1.58 &	50.73 &	47.69 & 0.7834 &12.45\\
			\textsc{ATT} & 2.31 &	45.31	& 52.38 & 0.8269 & 13.89\\
			\textsc{NRM} & 0.64 &	44.98 &	54.38 & 0.7809 & 13.73\\
			\hline
			\textsc{X2Tree} & 0.44 &	34.02 &	65.53 & 0.7733 &15.87 \\
			\hline
		\end{tabular}
	\end{table}
	
	For the Level-3 (acceptable ratio), \textsc{X2Tree} visibly outperforms other models. The best baseline method \textsc{NRM} achieves 54.38\% Level-3 ratio, while \textsc{X2Tree} reaches 65.53\% with an increase percentage of 11.15\%. This improvement is mainly due to less irrelevant (Level-2) responses being generated (34.02\% v.s. 44.98\%), indicating \textsc{X2Tree} outputs more acceptable responses.

	We further notice from Table \ref{table:human_judge} that the percentage of ungrammatical (Level-1) responses from \textsc{X2Tree} is less than other baselines (equal to \textsc{EncDec}) and the BLEU score is greater than other baselines in the experiments. 
	It shows that responses generated by the tree-structured decoder are more grammatical than those from the chain-structured decoders and demonstrate the \textsc{X2Tree}'s robustness to parser errors.
	Additionally,  \textsc{X2Tree} and \textsc{EncDec} achieve best grammatical ratio (99.56\%), but \textsc{EncDec} fails in generating relevant responses. Hence, Tree Decoder can improve the response relevance in experiments. We conjecture the reason is that \textsc{X2Tree} firstly generate the core verb of the responses. The first generated may be more relevant to the post and makes the whole response more relevant to the post.
	

	In summary, the experiments demonstrate that \textsc{X2Tree} is able to generate more grammatical and relevant responses, and also show \textsc{X2Tree} obtains the ability to generate correct trees.

	\subsection{Easiness to Learning}
	From Table \ref{table:human_judge}, we discover that the percentage of grammatical responses from \textsc{X2Tree} visibly surpasses other models in the experiments.
	We conjecture that the tree-structured decoder is easier to learn because its hidden states need to store less information than their counterparts in a chain-structured decoder.
	
	In detail, given a response utterance with length $T$, the hidden state at position $t$ in a chain-structured decoder needs to store the information of all previous words $\bm{y}_{<t}$, the average size of $\bm{y}_{<t}$ is $\frac{T+ 1}{2}$ (with an extra \textsc{eos} token).
	In contrast, in a tree-structured decoder, $\bm{h}_t$ only needs to store the information of its ancestors $y_t$. After transforming the response into a triple dependency tree structure, the average depth of nodes is $O(\sqrt{T})$ \cite{Flajolet1982}. In the worst case, the depth of a triple dependency tree is $T$, and the average number of ancestors of nodes is $\frac{T+1}{2}$, which is the same to the average size of $\bm{y}_{<t}$. Fig. \ref{fig:steps_to_store} shows the average number of steps hidden states need to remember at different sequence lengths for our data set. 
	
	\begin{figure}[htbp]
		\begin{center}
			\includegraphics[width=3.2in]{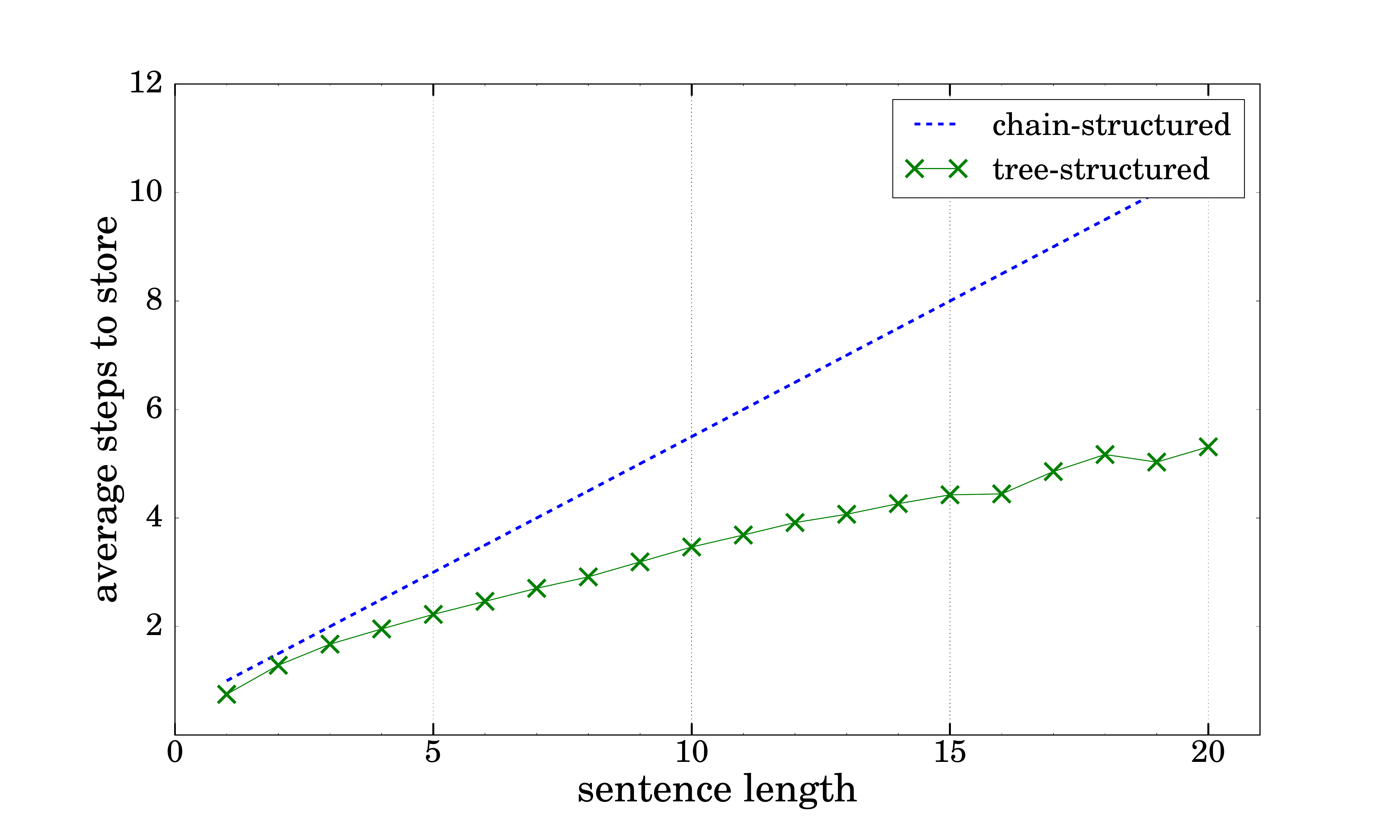}
			\caption{Average number of steps need to be stored for hidden states in both structures.}
			\label{fig:steps_to_store}
		\end{center}
	\end{figure}
	
	Overall, hidden states of a tree-structured decoder need store less information than chain-structured decoder's. This makes \textsc{X2Tree} potentially capable to handle more complex semantic structures in the response utterances.
	
	\section{Related Work}\label{sec:relatedWork}
	
	
	\textbf{Statistical Machine Translation.} The neural-based encoder-decoder framework for generative conversation models follows the line of statistical machine translation.
	Sutskever et al.~\shortcite{Sutskever2014} used multi-layered LSTM as the encoder and the decoder for machine translation.
	Later, Cho et al.~ \shortcite{Cho2014} proposed the encoder-decoder framework, where the context vector is fed to every unit in the decoder.
	Bahdana et al.~\shortcite{Bahdanau2014Neural} extended the encoder-decoder framework with the attention mechanism to model the alignment between source and target sequences. 

	\textbf{Conversation models.}
	Inspired by neural SMT, recent studies showed that these models can also be successfully applied to dialogue systems.
	Specifically, for short conversation, Shang et al.~\shortcite{Shang2015} proposed the Neural Responding Machine which further extended the attention mechanism with both global and local schemes.
	Zhou et al.~\shortcite{Zhou2017MARM} proposed MARM to generate diverse responses upon multiple mechanisms.
	Most recently, some researchers focused on multi-round conversation.
	Serban et al.~\shortcite{Serban2015} built an end-to-end dialogue system using hierarchical neural network.
	Sordoni et al.~\shortcite{Sordoni2015} proposed a related model with a hierarchical recurrent encoder-decoder framework for query suggestion.
	Our proposed model can also be applied to these multi-round conversation models and potentially improve the performances.
	
	\textbf{Tree-Structured Neural Network.}
	Recently, some studies use tree-structured neural network instead of the conventional chain-structured neural network to improve the quality of semantic representation.
	Socher et al.~\shortcite{Socher2013Recursive} proposed the Recursive Neural Tensor Network. Each phrase is represented by word vectors and its parse tree. Vectors of higher level nodes are computed using their child phrase vectors.
	Tai et al. ~\shortcite{Tai2015} and Zhu et al.~\shortcite{Zhu2015Long} extended the chain-structured LSTM to tree structures. 
	All above models use tree structures to summarize a sentence into a context vector, while we propose to decode from a context vector to generate sentences in a root-to-leaf direction.
	Additionally, Zhang et al.\shortcite{Zhang2016Top} proposed Tree LSTM activation function in \emph{top-down} fashion.
	Here, two important points differentiate our work with theirs.
	First, Zhang et al. mainly estimate generation probability of dependency tree and apply their model to sentence completion and dependency parsing reranking tasks,
	while \textsc{X2Tree} handles dialogue modeling in encoder-decoder framework.
	Second, due to the canonicalization method, \textsc{X2Tree} model process fixed  number ($K=3$) of children at each step for GPU acceleration, while Zhang et al. need to process the children sequentially. Thus, the proposed tree canonicalization method helps to reduce the training time.
	To this end, some works also aim at generating different structure types.
	Rabinovich et al.~\shortcite{Rabinovich2017} proposed the abstract syntax networks to transform card image of the game HearthStone  into well-formed and executable outputs.
	Cheng et al.~\shortcite{Cheng2017} utilized predicate-argument structures to store natural language utterances as intermediate and domain-general representations.

	\section{Conclusion and Future Work}\label{sec:con}
	
	In this study, we proposed a tree-structured decoder to improve the response quality in dialogue systems. By incorporating linguistic knowledge into the modeling process, the proposed \textsc{X2Tree} framework outperforms baseline methods over 11.15\% increase of acceptance ratio in response generation. Future study on incorporating a tree-structured encoder is promising to further enhance the sentence generation quality.

	\section{Acknowledgments}
	This work was supported by the National Key Research and Development Program of China under Grant No. 2017YFB1002104, the National Natural Science Foundation of China (No.61473274, 61573335).
	
	This work was also supported by WeChat Tencent. We thank  Leyu Lin,  Lixin Zhang, Cheng Niu and Xiaohu Cheng for their constructive advices. We also thank the anonymous AAAI reviewers for their helpful feedback.    
 
	\bibliographystyle{aaai}
	\small{
		\bibliography{docs}
	}
	
\end{document}